\title{Proving Conjectures Acquired\\ by Composing Multiple Biases}
\author{
Jovial Cheukam-Ngouonou\textsuperscript{\rm 1, 2, 4},
Ramiz Gindullin\textsuperscript{\rm 1, 2},\\
Nicolas Beldiceanu\textsuperscript{\rm 1, 2},
R\'emi Douence\textsuperscript{\rm 1, 2, 3},
Claude-Guy Quimper\textsuperscript{\rm 4}
}
\date{%
    \small{
    \textsuperscript{\rm 1}IMT Atlantique, Nantes, France\\
    \textsuperscript{\rm 2}LS2N, Nantes, France,\\
    \textsuperscript{\rm 3}INRIA, Nantes, France,\\
    \textsuperscript{\rm 4}Université Laval, Quebec City, Canada}
}
\newtheorem{conjecture}{Conjecture}
\newtheorem{example}{Example}
\newtheorem{proof}{Proof}
\renewenvironment{proof}[1][\proofname]{\par
    \textbf{#1.}
}{%
    $\blacksquare$
}
\newcommand{\nv}{v}
\newcommand{\nc}{c}
\newcommand{\ns}{s}
\newcommand{\minc}{\underline{c}}
\newcommand{\maxc}{\overline{c}}
\newcommand{\mins}{\underline{s}}
\newcommand{\maxs}{\overline{s}}
\newcommand{\nval}{\mathit{nval}}
\newcommand{\pn}{n}
\newcommand{\pmin}{\underline{m}}
\newcommand{\pmax}{\overline{m}}
\newcommand{\prange}{\underline{\overline{m}}}
\newcommand{\tn}{n}
\newcommand{\tleaves}{\ell}
\newcommand{\tmindegree}{\underline{d}}
\newcommand{\tmaxdegree}{\overline{d}}
\begin{document}
\maketitle

\begin{abstract}
We present the proofs of the conjectures mentioned in the paper published in the proceedings of the 2024 AAAI conference~\cite{DecompGindullinBCDQ24}, and discovered by the decomposition methods presented in the same paper.
\end{abstract}

\section{Introduction}

Four conjectures on {\sc Digraphs} are described and proved in Section~\ref{sec:graph}.
Then, in Section~\ref{sec:tree}, we present a conjecture for {\sc Rooted Trees} and prove it.

\section{Proving Four Conjectures on {\sc Digraphs} Found Using Decomposition Methods}
\label{sec:graph}

This section presents four conjectures on {\sc Digraphs} where every vertex is adjacent to at least an arc.

\paragraph*{Notations}
The expression $(\mathit{cond}\,?\,x:y)$ denotes $x$ if condition $\mathit{cond}$ holds, $y$ otherwise. 
We also introduce the notation $\left[ e \right]$ defined as $(\mathit{e}\,?\,1:0)$.

\begin{conjecture}\label{ex:conjecture1}
Conj.~(\ref{eq:conjecture1}) provides a
sharp lower bound on the number of connected components $c$ of a digraph wrt its number of vertices $v$, the greatest number of vertices $\maxc$ inside a connected component, and the smallest number of vertices $\mins$ in a strongly connected component (scc).
\begin{equation}\label{eq:conjecture1}
\nc\geq
\textstyle\lceil\nv\slash\maxc\rceil+
\textstyle\left[\neg ((2\cdot\mins\leq\maxc) \lor (\mins\geq (\nv \bmod \maxc=0\;?\;\maxc\;:\nv\bmod\maxc))\right]
\end{equation}
\end{conjecture}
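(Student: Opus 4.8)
First I would unpack the indicator term. Writing $k=\lceil \nv/\maxc\rceil$ for the leading term and $r=(\nv \bmod \maxc=0\,?\,\maxc:\nv\bmod\maxc)$, so that $r\in\{1,\dots,\maxc\}$ and $\nv=(k-1)\maxc+r$, De Morgan turns the bracketed predicate into $[\,2\mins>\maxc \text{ and } \mins<r\,]$. Thus the claim splits into a base inequality $\nc\ge k$ that always holds, plus a $+1$ correction that must be justified exactly when $2\mins>\maxc$ and $\mins<r$. The base inequality is immediate: each of the $\nc$ connected components has at most $\maxc$ vertices, so $\nv\le \nc\cdot\maxc$ and hence $\nc\ge\lceil \nv/\maxc\rceil=k$.

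The crux is a structural observation about the regime $2\mins>\maxc$. Every connected component partitions into strongly connected components, and each scc has at least $\mins$ vertices by definition of $\mins$. Hence a component containing two or more sccs would have at least $2\mins>\maxc$ vertices, contradicting that $\maxc$ is the largest component size. I would record this as a lemma: \emph{if $2\mins>\maxc$ then every connected component is itself a single strongly connected component}. Consequently, in this regime the multiset of component sizes coincides with the multiset of scc sizes; in particular its minimum equals $\mins$ and its maximum equals $\maxc$.

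With the lemma in hand the correction follows by contradiction. Assume $2\mins>\maxc$ and $\mins<r$ but $\nc=k$ (by the base bound $\nc$ cannot be smaller). The $k$ component sizes then lie in $[\mins,\maxc]$, sum to $\nv=(k-1)\maxc+r$, and at least one of them equals $\mins$. Bounding the remaining $k-1$ sizes by $\maxc$ gives $\nv\le \mins+(k-1)\maxc$, i.e. $r\le\mins$, contradicting $\mins<r$. Therefore $\nc\ge k+1$. I expect this counting step to be short; the genuinely delicate point is the scc/component lemma, where one must argue cleanly that the $\mins$ lower bound applies to \emph{every} scc inside \emph{every} component, and check the degenerate case $k=1$, where $\maxc=\nv$ gives $r=\nv$ and, when $2\mins>\maxc$, the lemma forces $\mins=\nv\ge r$, so the correction never triggers.

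Finally, to certify sharpness I would exhibit extremal digraphs realizing equality, built as disjoint unions of directed cycles (each a single scc with every vertex incident to an arc): one cycle of length $\maxc$, cycles of length $\mins$, and filler cycles of intermediate length, with all lengths chosen in $[\mins,\maxc]$ to total $\nv$. The main obstacle here is purely arithmetic: deciding for which triples $(\nv,\maxc,\mins)$ such a decomposition with the prescribed maximum and minimum exists, and matching the resulting component count to the right-hand side in both the $+0$ and $+1$ regimes.
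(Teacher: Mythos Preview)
Your argument is correct and rests on the same two ingredients the paper uses: the pigeonhole bound $c\ge\lceil v/\maxc\rceil$, and the observation that $2\mins>\maxc$ forces every connected component to be a single scc (your lemma is exactly the paper's repeated claim ``from $\neg$\ding{174} each connected component contains exactly one scc''). The counting step $v\le\mins+(k-1)\maxc$ in your contradiction is likewise the paper's computation in its cases $\mbox{\ding{172}}\land\neg\mbox{\ding{173}}\land\neg\mbox{\ding{174}}$ and $\neg\mbox{\ding{172}}\land\neg\mbox{\ding{173}}\land\neg\mbox{\ding{174}}$.

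Where you differ is in the decomposition. The paper unfolds all three Boolean conditions $\mbox{\ding{172}}$ ($v\bmod\maxc=0$), $\mbox{\ding{173}}$, $\mbox{\ding{174}}$ simultaneously into seven disjoint cases and proves each separately; you instead absorb the $\bmod$ distinction into the single parameter $r$ with $v=(k-1)\maxc+r$, so only two regimes remain (indicator $0$ or $1$). This buys you a much shorter proof with no case duplication, and your handling of the degenerate case $k=1$ is cleaner than anything explicit in the paper. The paper's exhaustive enumeration, on the other hand, is deliberately mechanical---it matches their stated methodology of systematically unfolding conditions---and yields the per-case witness table directly. For sharpness you are at the same level as the paper: it too only exhibits one witness per case rather than proving tightness for all admissible triples $(v,\maxc,\mins)$.
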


\begin{conjecture}\label{ex:conjecture2}
Conj.~(\ref{eq:conjecture2}) gives a
sharp lower bound $\nc$ of a digraph wrt its number of scc $\ns$, and the $\maxc$ and $\mins$ characteristics introduced in Conj.~(\ref{eq:conjecture1}).
\begin{equation}\label{eq:conjecture2}
\nc\geq \color{black}\biggl\lceil\color{black}\textstyle \color{black}\dfrac{\ns}{
\left\lfloor\maxc/\mins\right\rfloor}
\color{black}\biggr\rceil
\color{black}
\end{equation}

\end{conjecture}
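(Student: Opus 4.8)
The plan is to exploit the fact that, within each (weakly) connected component, the strongly connected components form a partition of its vertex set, since a strongly connected set is in particular weakly connected and therefore cannot straddle two connected components. First I would label the connected components $C_1,\dots,C_{\nc}$ and write $s_i$ for the number of scc contained in $C_i$; because each scc lies entirely inside one connected component, these counts satisfy $\ns=\sum_{i=1}^{\nc}s_i$. I would also record the sanity check that $\maxc\ge\mins$: the component containing a smallest scc has at least $\mins$ vertices, so that $k:=\lfloor\maxc/\mins\rfloor\ge1$ and the right-hand side of Conj.~(\ref{eq:conjecture2}) is well defined (no division by zero).

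The core of the lower-bound direction is a short double-counting argument. The $s_i$ scc of $C_i$ are vertex-disjoint and each contains at least $\mins$ vertices, so $C_i$ has at least $s_i\cdot\mins$ vertices; on the other hand $C_i$ has at most $\maxc$ vertices by definition of $\maxc$. Hence $s_i\cdot\mins\le\maxc$, and since $s_i$ is an integer, $s_i\le\lfloor\maxc/\mins\rfloor=k$. Summing over all $\nc$ components gives $\ns\le\nc\cdot k$, so $\nc\ge\ns/k$, and because $\nc$ is an integer I may round up to obtain $\nc\ge\lceil\ns/\lfloor\maxc/\mins\rfloor\rceil$, which is exactly the claimed inequality.

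To justify that the bound is \emph{sharp} I would exhibit, for admissible values of $\ns$, $\maxc$, and $\mins$, a digraph attaining equality. The idea is to pack the $\ns$ scc as tightly as possible: distribute them among $\lceil\ns/k\rceil$ connected components with at most $k$ scc each, realizing each scc as a directed cycle on $\mins$ vertices and chaining the scc inside a component by one-directional arcs so the component is weakly connected but the scc stay strongly separated. Since $k\cdot\mins\le\maxc$, no component then exceeds $\maxc$ vertices, and to force the value $\maxc$ to actually be attained I would enlarge one scc (keeping all others at size $\mins$) until its component has exactly $\maxc$ vertices. I expect this sharpness step to be the main obstacle rather than the lower bound itself: one must verify that all three target parameters can be realized simultaneously in the boundary cases, in particular when $\ns$ is not a multiple of $k$ (the last component receives fewer scc) and when enlarging a component must not accidentally lower the realized minimum scc size below $\mins$. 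Handling these corner cases carefully is where the real care lies.
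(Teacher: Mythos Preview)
Your argument for the lower bound is correct and is essentially the same as the paper's: the paper also observes that each connected component can contain at most $\lfloor\maxc/\mins\rfloor$ strongly connected components and then applies the pigeonhole step $\ns\le\nc\cdot\lfloor\maxc/\mins\rfloor$, which you simply spell out in more detail (the disjointness and counting $s_i\cdot\mins\le\maxc$). Your sharpness discussion goes beyond the paper, which explicitly refrains from proving sharpness and, for this conjecture, does not even exhibit a witness.
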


\begin{conjecture}\label{ex:conjecture3}
Conj.~(\ref{eq:conjecture3}) depicts a
sharp lower bound on the maximum number of vertices $\maxs$ inside an scc of a digraph wrt the $\ns$ and $\mins$ characteristics previously in\-troduced.
\begin{equation}\label{eq:conjecture3}
\maxs \geq \biggl\lceil \left(\nv=\mins\; ?\; \nv : \nv-\mins\right) / \left(\ns-1 + [\ns = 1]\right)
 \biggr\rceil
\end{equation}
\end{conjecture}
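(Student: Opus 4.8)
The plan is to let $s_1 \le s_2 \le \cdots \le s_{\ns}$ denote the sizes of the $\ns$ strongly connected components, so that $\mins = s_1$, $\maxs = s_{\ns}$, and $\sum_{i=1}^{\ns} s_i = \nv$ since the sccs partition the vertex set. I would first show that both conditional expressions in Conj.~(\ref{eq:conjecture3}) are governed by whether $\ns = 1$: when $\ns = 1$ the single scc contains every vertex, so $\nv = \mins = \maxs$ and $[\ns = 1] = 1$; when $\ns \ge 2$ the smallest component satisfies $\mins = s_1 \le \nv/\ns < \nv$, so $\nv \ne \mins$ and $[\ns = 1] = 0$. Hence the test $\nv = \mins$ in the numerator is equivalent to $\ns = 1$, and the formula cleanly splits into two regimes that I would treat in turn.

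In the regime $\ns = 1$ the right-hand side evaluates to $\lceil \nv / 1\rceil = \nv$, which equals $\maxs$, so the bound holds with equality and is trivially tight. The substantive case is $\ns \ge 2$, where the right-hand side reduces to $\lceil (\nv - \mins)/(\ns-1)\rceil$. Here I would use a single averaging step: removing the smallest component leaves $\nv - \mins = \sum_{i=2}^{\ns} s_i$, a sum of $\ns - 1$ terms each bounded above by $s_{\ns} = \maxs$. Therefore $\nv - \mins \le (\ns-1)\,\maxs$, and since $\maxs$ is an integer, $\maxs \ge \lceil (\nv-\mins)/(\ns-1)\rceil$, which establishes the lower bound.

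For sharpness I would exhibit, for every parameter triple $(\nv, \ns, \mins)$ that actually arises from a digraph, a witness attaining equality. Fix one component of size $\mins$ and distribute the remaining $\nv - \mins$ vertices as evenly as possible among the other $\ns - 1$ components, so that the largest has size exactly $\lceil(\nv-\mins)/(\ns-1)\rceil$. The one point requiring care is that the designated component of size $\mins$ must remain the genuine minimum; this follows because feasibility forces $\nv \ge \ns\cdot\mins$, whence the evenly distributed sizes are all at least $\lfloor(\nv-\mins)/(\ns-1)\rfloor \ge \mins$. Realizing each component of size $k \ge 2$ as a directed cycle (and a size-$1$ component as a lone vertex equipped with a self-loop or an inter-component arc), and linking the components into an acyclic chain in the condensation so that every vertex is incident to an arc and no two prescribed sccs merge, then yields the required digraph.

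I expect the main obstacle to be the sharpness construction rather than the inequality: the lower bound is a one-line averaging argument, whereas verifying the witness requires checking simultaneously that the prescribed component sizes are consistent with $\mins$ being the true minimum, that the added inter-component arcs do not accidentally merge components, and that the hypothesis that every vertex is adjacent to an arc is met in the degenerate small cases.
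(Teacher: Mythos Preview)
Your argument for the inequality is essentially identical to the paper's: you split on whether $\ns=1$ (the paper phrases this as the conjunction of $\nv=\mins$ and $\ns=1$, noting the mixed cases are infeasible, which is exactly your observation that the two tests coincide), and in the nontrivial case you bound $\nv-\mins=\sum_{i\ge 2}s_i$ by $(\ns-1)\maxs$ and take the ceiling. The paper stops there and only exhibits a single witness per case, whereas you go further and sketch a full sharpness construction for every feasible triple $(\nv,\ns,\mins)$; that extra paragraph is sound (the key check $\lfloor(\nv-\mins)/(\ns-1)\rfloor\ge\mins$ indeed follows from $\nv\ge\ns\cdot\mins$), so your proposal strictly subsumes the paper's treatment.
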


\begin{conjecture}\label{ex:conjecture4}
Conj.~(\ref{eq:conjecture4}) depicts a
sharp lower bound on the maximum number of vertices $\maxc$ inside a connected component of a digraph wrt $\minc$ (the smallest
number of vertices inside a connected component) and the $\nv$, $\nc$ and $\maxs$ characteristics previously introduced.
\begin{equation}\label{eq:conjecture4}
    \maxc\geq \color{black}\bigg(\color{black}\color{black}v=c\cdot\minc
    \hspace{0.2cm}
    ?
    \hspace{0.2cm}
   \minc \hspace{0.1cm}:\hspace{0.1cm}\max\left(\maxs,\left\lceil(v-\minc)/(c-1)\right\rceil\right)\bigg)
    \end{equation}
\end{conjecture}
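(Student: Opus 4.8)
The plan is to strip the case split down to two elementary inequalities, after recording two facts valid for \emph{every} digraph of the stated form. Write the sizes of the $\nc$ connected components as $n_1\geq n_2\geq\cdots\geq n_{\nc}$, so that $n_1=\maxc$, $n_{\nc}=\minc$ and $\sum_{i=1}^{\nc}n_i=\nv$. First, $\maxc\geq\minc$ is immediate, being the maximum of a multiset bounded below by its minimum. Second, $\maxc\geq\maxs$, because every strongly connected component lies inside a single (weakly) connected component, so the component hosting the largest scc already contains at least $\maxs$ vertices. Finally, summing $n_i\geq\minc$ over the $\nc$ components gives $\nv\geq\nc\cdot\minc$, and $\nc=1$ forces $\minc=\maxc=\nv$ and hence the case $\nv=\nc\cdot\minc$; therefore the second branch of the bound is evaluated only when $\nc\geq2$, making the division by $\nc-1$ legitimate.

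For the first branch, assume $\nv=\nc\cdot\minc$. Since each $n_i\geq\minc$ while the sum equals $\nc\cdot\minc$, every inequality is tight, so $n_i=\minc$ for all $i$ and in particular $\maxc=\minc$, matching the claimed bound with equality. (Consistently, $\maxs\leq\maxc=\minc$ here, which is why this branch needs no $\max$ with $\maxs$.)

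For the second branch, assume $\nv\neq\nc\cdot\minc$, so $\nv>\nc\cdot\minc$ and $\nc\geq2$. Deleting one component of minimum size leaves $\nc-1$ components of total size $\nv-\minc$; their largest is at least their average, so by integrality $\maxc=n_1\geq\lceil(\nv-\minc)/(\nc-1)\rceil$. Combined with $\maxc\geq\maxs$ this gives $\maxc\geq\max(\maxs,\lceil(\nv-\minc)/(\nc-1)\rceil)$, which is exactly the second branch. The inequality part is thus short.

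What remains—and where I expect the genuine difficulty to lie—is \emph{sharpness}: exhibiting, for each feasible tuple $(\nv,\nc,\minc,\maxs)$, a digraph attaining the bound. For the first branch one takes $\nc$ components of $\minc$ vertices each, every vertex incident to an arc (for instance a directed cycle of length $\maxs$ carrying a directed tail of $\minc-\maxs$ vertices), which realizes $\maxc=\minc$ and max scc size $\maxs$. For the second branch the construction must simultaneously (i) keep one component of size $\minc$, (ii) spread the remaining $\nv-\minc$ vertices as evenly as possible over $\nc-1$ components so the largest has exactly $\lceil(\nv-\minc)/(\nc-1)\rceil$ vertices, and (iii) embed an scc of size exactly $\maxs$ without producing a larger component or a larger scc. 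Carrying out (i)--(iii) \emph{jointly}, while keeping every vertex on an arc and respecting the sub-case split according to which of $\maxs$ and $\lceil(\nv-\minc)/(\nc-1)\rceil$ dominates, is the main work; it is bookkeeping rather than a deep idea, but it is where the proof's effort concentrates.
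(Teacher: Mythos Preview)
Your inequality argument is correct and is essentially the paper's proof: the paper also splits on the condition $v=c\cdot\minc$, observes in the first case that all components must have size $\minc$, and in the second case combines $\maxc\cdot(c-1)\geq v-\minc$ (your pigeonhole/average step) with the universal fact $\maxc\geq\maxs$; it also notes, as you do, that $\neg(v=c\cdot\minc)$ forces $c>1$ so the division is legitimate.

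Where you diverge is in scope. You flag sharpness as ``the main work'' and sketch a full witness construction for every feasible tuple $(\nv,\nc,\minc,\maxs)$, but the paper explicitly does \emph{not} prove sharpness: it states up front that ``we do not prove that these bounds are sharp'' and contents itself with a single witness digraph per case (Table~\ref{tab:conj4}). So the bookkeeping you anticipate for joint realisability of (i)--(iii) is simply absent from the paper; your proposal is strictly more ambitious than what the paper delivers, and the part that matches the paper is already complete in your first three paragraphs.
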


We now give the proofs for the correctness of the four bounds (\ref{ex:conjecture1})--(\ref{ex:conjecture4}). Although we do not prove that these bounds are sharp, 
we show for each proof by case that each case can indeed be reached by at least one example.
Most of the proofs go along this line.
\begin{itemize}
\item
First, we \say{unfold}~the conjecture, i.e.~we remove all its explicit or implicit conditions to create a conjecture whose cases are mutually disjoint. Each case consists of a precondition, and a corresponding simplified version of the conjecture assuming the precondition holds. The term \emph{implicit condition} refers to a condition derived from the simplification of a $\min$ (resp.~$\max$) term by assuming which part is smaller (resp.~greater) than the other part.
\item
Second, we usually prove by contradiction each case separately.
\end{itemize}
Tables~(\ref{tab:conj1}), ~(\ref{tab:conj3}), and~(\ref{tab:conj4}) 
resp. show the cases attached to Conj.~(\ref{ex:conjecture1}), (\ref{ex:conjecture3}), and~(\ref{ex:conjecture4}). For each case, the last column of tables~\ref{tab:conj1}, \ref{tab:conj3}, and~\ref{tab:conj4} gives an example of a digraph that reaches the corresponding bound. Note that, Conj.~(\ref{ex:conjecture2}) has a single case and does not require to partition the formula.\\

\begin{proof}[Proof of Conj.~(\ref{ex:conjecture1})]
We associate three conditions
\ding{172} $v\bmod\maxc=0$,
\ding{173} $\mins\geq (v \bmod \maxc=0\;?\;\maxc\;:\;v\bmod\maxc)$,
\ding{174} $2\cdot\mins\leq\maxc$
to Conj.~(\ref{ex:conjecture1}) and combine them in a systematic way to produce a set of seven mutually incompatible cases where the formula can be simplified as shown in Table~\ref{tab:conj1}. Note that the case $\mbox{\ding{172}}\land\mbox{\ding{173}}\land\mbox{\ding{174}}$ is not feasible.
We now prove each of the seven incompatible cases by contradiction.

\noindent\textbf{Proof by contradiction of case $\mbox{\ding{172}}\land\mbox{\ding{173}}\land\neg\mbox{\ding{174}}$ and case $\mbox{\ding{172}}\land\neg\mbox{\ding{173}}\land\mbox{\ding{174}}$}:
For all digraphs we have $c\cdot\maxc\geq v$. But the negation of Conj.~(\ref{ex:conjecture1}), namely $c<\left\lfloor v/\maxc\right\rfloor$, implies that $c\cdot\maxc < v$, a contradiction, as the implication assumes that $x < \lfloor y/z\rfloor\Rightarrow x\cdot z < y$.

\noindent\textbf{Proof by contradiction of cases $\neg\mbox{\ding{172}}\land\mbox{\ding{173}}\land\mbox{\ding{174}}$, $\neg\mbox{\ding{172}}\land\mbox{\ding{173}}\land\neg\mbox{\ding{174}}$}, and $\neg\mbox{\ding{172}}\land\neg\mbox{\ding{173}}\land\mbox{\ding{174}}$:
For any digraph we have that $c\cdot\maxc\geq v$, which implies that $c\geq\left\lfloor v/\maxc\right\rfloor$.
From the negation of Conj.~(\ref{ex:conjecture1}) we get
$c<\left\lfloor v/\maxc\right\rfloor+1$, that is $c\leq\left\lfloor v/\maxc\right\rfloor$.
So finally we have $c = \left\lfloor v/\maxc\right\rfloor$, and the corresponding maximum number of vertices is $\left\lfloor v/\maxc\right\rfloor\cdot\maxc$,
which is strictly less than $v=\left\lfloor v/\maxc\right\rfloor\cdot\maxc+v\bmod\maxc$ as, from the hypothesis $\neg$\ding{172}, we have $v\bmod\maxc>0$.

\noindent\textbf{Proof by contradiction of case $\mbox{\ding{172}}\land\neg\mbox{\ding{173}}\land\neg\mbox{\ding{174}}$:}
We derive from the negation of Conj.~(\ref{ex:conjecture1}) a lower bound and an upper bound on the number of vertices $v$ and show that the lower bound exceeds the upper bound.

\noindent\textbf{[Computing of a lower bound of $v$]}
By simplifying the negation of the conjecture, namely $c<\left\lfloor v/\maxc\right\rfloor+1$, we obtain $c\leq\left\lfloor v/\maxc\right\rfloor$, which implies that $c\cdot\maxc\leq v$.

\noindent\textbf{[Computing of an upper bound of $v$]}
From $\neg$\ding{174} we have that each connected component contains exactly one scc. One of these connected components has size $\mins$, while let $|c_i|$ denote the size of each of the remaining connected components (with $i\in[1,c-1]$). Consequently the total number of vertices is $\mins+\sum_{i=1}^{c-1}|c_i|$, which is less than or equal to $\mins+(c-1)\cdot\maxc$.

\noindent\textbf{[Deriving a contradiction]}
We showed that $v\in[c\cdot\maxc,\mins+(c-1)\cdot\maxc]$.
Hence we should have $c\cdot\maxc\leq\mins+(c-1)\cdot\maxc$.
But \ding{172} implies that $\neg$\ding{173} can be simplified as $\mins<\maxc$, which leads to the fact that the upper bound $\mins+(c-1)\cdot\maxc$ is strictly less than $c\cdot\maxc$, a contradiction.

\noindent\textbf{Proof by contradiction of case $\neg\mbox{\ding{172}}\land\neg\mbox{\ding{173}}\land\neg\mbox{\ding{174}}$:}
For any digraph we have that $c\cdot\maxc\geq v$.
Moreover, $v$ can be decomposed wrt $\maxc$ in the following way
$v = \left\lfloor v/\maxc\right\rfloor\cdot\maxc + v\bmod\maxc$.
This leads to the following inequality
$c\cdot\maxc\geq \left\lfloor v/\maxc\right\rfloor\cdot\maxc + v\bmod\maxc$,
which can be rewritten as
$c\geq \left\lfloor v/\maxc\right\rfloor + (v\bmod\maxc)/\maxc$.
From $\neg$\ding{172}, i.e. $v\bmod\maxc>0$,
and therefore
$c>\left\lfloor v/\maxc\right\rfloor$,
i.e.~$c\geq\left\lfloor v/\maxc\right\rfloor+1$.
By considering the negation of Conj.~(\ref{ex:conjecture1})
we obtain $c=\left\lfloor v/\maxc\right\rfloor+1$,
and the corresponding maximum number of vertices is
$\left\lfloor v/\maxc\right\rfloor\cdot\maxc+\mins$, as
one of the connected components must have $\mins$ vertices since, from hypothesis $\neg$\ding{174} each connected component contains exactly one scc.
From hypothesis~$\neg$\ding{173}, $\left\lfloor v/\maxc\right\rfloor\cdot\maxc+\mins$ is strictly less than
$v=\left\lfloor v/\maxc\right\rfloor\cdot\maxc + v\bmod\maxc$,
a contradiction.
\end{proof}\\

\begin{proof}[Proof of Conj.~(\ref{ex:conjecture2})]
First, observe that the maximum number of scc within a same connected component is $\left\lfloor\maxc/\mins\right\rfloor$.
Second, as we have to dispatch $s$ scc in the connected components of the digraph, and as each connected component contains at most $\left\lfloor\maxc/\mins\right\rfloor$ scc, we obtain the bound given by Conj.~(\ref{eq:conjecture2}), namely
$\left\lceil s/\!\left\lfloor\maxc/\mins\right\rfloor\right\rceil$.
\end{proof}\\

\begin{proof}[Proof of Conj.~(\ref{ex:conjecture3})] The proof is done by decomposing a digraph as a partition of scc.
In Table~\ref{tab:conj3}, we associate the two conditions
\ding{175} $v=\mins$,
\ding{176} $s=1$
to Conj.~(\ref{ex:conjecture3}) and combine them in a systematic way to produce a set of mutually incompatible cases where the formula can be simplified.
Note that cases $\mbox{\ding{175}}\land\neg\mbox{\ding{176}}$ and $\neg\mbox{\ding{175}}\land\mbox{\ding{176}}$ are absent as they are not feasible.

\noindent\textbf{Case $\mbox{\ding{175}}\land\mbox{\ding{176}}$:}
This case is obvious as Condition~\ding{176} indicates that the digraph has just one scc,
meaning that it also corresponds to the largest and smallest scc.

\noindent\textbf{Case $\neg\mbox{\ding{175}}\land\neg\mbox{\ding{176}}$:}
Now, as $s>1$, let $s_i$ be the number of vertices of the $i$-th scc of the digraph (with $i\in [1,s-1]$.
We have $\sum_{i=1}^{s-1}\maxs\geq\sum_{i=1}^{s-1}s_i$, which is equivalent to $\maxs\cdot(s-1)\geq v-\mins$.
Therefore $\maxs\geq (v-\mins)/(s-1)$, and as $\maxs\in \mathbb{N}$ we obtain $\maxs\geq\left\lceil (v-\mins)/(s-1)\right\rceil$.
\end{proof}\\

\begin{proof}[Proof of Conj.~(\ref{ex:conjecture4})] The proof is done by decomposing a digraph as a partition of connected components. We associate the following two conditions
\ding{177} $v=c\cdot\minc$,
\ding{178} $\maxs\geq\left\lceil (v-\minc)/(c-1)\right\rceil$
to Conj.~(\ref{ex:conjecture4}) and combine them in a systematic way to produce a set of mutually incompatible cases where the formula can be simplified, as shown in Table~\ref{tab:conj4}.

\noindent\textbf{[Case $\mbox{\ding{177}}\land(\mbox{\ding{178}}\lor\neg\mbox{\ding{178}})$]} As condition~\ding{177} indicates that every connected component of the digraph has the size of the smallest connected component, it implies that $\maxc = \minc$.

\noindent\textbf{[Cases $\neg\mbox{\ding{177}}\land\mbox{\ding{178}}$ and $\neg\mbox{\ding{177}}\land\neg\mbox{\ding{178}}$]} Let $|c_i|$ denotes the number of vertices of the $i$-th connected component of a digraph (with $i\in[1,c]$). First note that for all digraphs, as $\sum_{i=1}^{c-1}\maxc\geq\sum_{i=1}^{c-1}|c_i|$, we have $\maxc\cdot(c-1)\geq\nv-\minc$. And because $\neg$\ding{177} implies that $c>1$, we have then $\maxc\geq\left\lceil(v-\minc)/(c-1)\right\rceil$. Therefore, as for all digraphs we also have $\maxc \geq \maxs$, we derive that $\maxc \geq \max\left(\maxs,\left\lceil (v-\minc)/(c-1)\right\rceil\right)$, which proves the two cases $\neg\mbox{\ding{177}}\land\mbox{\ding{178}}$ and $\neg\mbox{\ding{177}}\land\neg\mbox{\ding{178}}$.
\end{proof}

\tikzset{every loop/.style={min distance=3.5mm,in=120,out=60,looseness=4,->,brown}}
\begin{table*}[bt]
\center
\begin{tabular}{ccccc}
\toprule
{\footnotesize Cond.~\ding{172}}           & {\footnotesize Cond.~\ding{173}}           & {\footnotesize Cond.~\ding{174}}           & {\footnotesize simplified  bound for $c$}                  & {\footnotesize witness parameters}                                                                                                    \\
\midrule
{\footnotesize\phantom{$\neg$}\ding{172}} & {\footnotesize\phantom{$\neg$}\ding{173}} & {\footnotesize$\neg$\ding{174}}           & {\footnotesize$\left\lfloor v/\,\maxc\right\rfloor\hspace*{3pt}\phantom{+1}$}
&
\adjustbox{valign=m}{\begin{tikzpicture}
\begin{scope}
\node[circle,draw=brown,fill=brown,minimum size=0pt,inner sep=0pt] (c11) at (1,0.3) {\color{brown}.};
\path
    (c11) edge [loop] node {} (c11);
\node[anchor=west] (a1) at (-1.25,0.3) {
\scriptsize$\left\{\hspace*{-0.3pt}\begin{array}{lr}  v=&1\\
                       \maxc=&1\\
                       \mins=&1
       \end{array}\right.$};
\node[anchor=west] (a2) at (3,0.3) {\scriptsize$c=1$};
\end{scope}
\end{tikzpicture}}
\\
{\footnotesize\phantom{$\neg$}\ding{172}} & {\footnotesize$\neg$\ding{173}}           & {\footnotesize\phantom{$\neg$}\ding{174}} & {\footnotesize$\left\lfloor v/\,\maxc\right\rfloor\hspace*{3pt}\phantom{+1}$}
&
\adjustbox{valign=m}{\begin{tikzpicture}
\begin{scope}
\node[circle,draw=brown,fill=brown,minimum size=0pt,inner sep=0pt] (c11) at (1,0.15) {\color{brown}.};
\node[circle,draw=brown,fill=brown,minimum size=0pt,inner sep=0pt] (c12) at (1,0.45) {\color{brown}.};
\draw[->,brown] (c11) edge (c12);
\node[anchor=west] (a1) at (-1.25,0.3) {
\scriptsize$\left\{\hspace*{-0.3pt}\begin{array}{lr}  v=&2\\
                       \maxc=&2\\
                       \mins=&1
       \end{array}\right.$};
\node[anchor=west] (a2) at (3,0.3) {\scriptsize$c=1$};
\end{scope}
\end{tikzpicture}}
\\
{\footnotesize\phantom{$\neg$}\ding{172}} & {\footnotesize$\neg$\ding{173}}           & {\footnotesize$\neg$\ding{174}}           & {\footnotesize$\left\lfloor v/\,\maxc\right\rfloor+1$}           &
\adjustbox{valign=m}{\begin{tikzpicture}
\begin{scope}
\node[circle,draw=brown,fill=brown,minimum size=0pt,inner sep=0pt] (c11) at (1,0) {\color{brown}.};
\node[circle,draw=brown,fill=brown,minimum size=0pt,inner sep=0pt] (c12) at (1,0.3) {\color{brown}.};
\node[circle,draw=brown,fill=brown,minimum size=0pt,inner sep=0pt] (c13) at (1,0.6) {\color{brown}.};
\draw[->,brown] (c11) edge (c12);
\draw[->,brown] (c12) edge (c13);
\draw[->,brown,rounded corners=2pt] (c13) -- ($(c13)+(0.2,0)$) -- ($(c11)+(0.2,0)$) -- (c11);
\node[anchor=west] (a1) at (-1.25,0.3) {
\scriptsize$\left\{\hspace*{-0.3pt}\begin{array}{lr}  v=&9\\
                       \maxc=&3\\
                       \mins=&2
       \end{array}\right.$};
\node[anchor=west] (a2) at (3,0.3) {\scriptsize$c=4$};
\end{scope}
\begin{scope}[xshift=0.6cm]
\node[circle,draw=brown,fill=brown,minimum size=0pt,inner sep=0pt] (c11) at (1,0) {\color{brown}.};
\node[circle,draw=brown,fill=brown,minimum size=0pt,inner sep=0pt] (c12) at (1,0.3) {\color{brown}.};
\draw[->,brown] (c11) edge (c12);
\draw[->,brown,rounded corners=2pt] (c12) -- ($(c12)+(0.2,0)$) -- ($(c11)+(0.2,0)$) -- (c11);
\end{scope}
\begin{scope}[xshift=1.05cm]
\node[circle,draw=brown,fill=brown,minimum size=0pt,inner sep=0pt] (c11) at (1,0) {\color{brown}.};
\node[circle,draw=brown,fill=brown,minimum size=0pt,inner sep=0pt] (c12) at (1,0.3) {\color{brown}.};
\draw[->,brown] (c11) edge (c12);
\draw[->,brown,rounded corners=2pt] (c12) -- ($(c12)+(0.2,0)$) -- ($(c11)+(0.2,0)$) -- (c11);
\end{scope}
\begin{scope}[xshift=1.5cm]
\node[circle,draw=brown,fill=brown,minimum size=0pt,inner sep=0pt] (c11) at (1,0) {\color{brown}.};
\node[circle,draw=brown,fill=brown,minimum size=0pt,inner sep=0pt] (c12) at (1,0.3) {\color{brown}.};
\draw[->,brown] (c11) edge (c12);
\draw[->,brown,rounded corners=2pt] (c12) -- ($(c12)+(0.2,0)$) -- ($(c11)+(0.2,0)$) -- (c11);
\end{scope}
\end{tikzpicture}}
\\
{\footnotesize$\neg$\ding{172}}           & {\footnotesize\phantom{$\neg$}\ding{173}} & {\footnotesize\phantom{$\neg$}\ding{174}} & {\footnotesize$\left\lfloor v/\,\maxc\right\rfloor+1$}           &
\adjustbox{valign=m}{\begin{tikzpicture}
\begin{scope}
\node[circle,draw=brown,fill=brown,minimum size=0pt,inner sep=0pt] (c11) at (1,0.15) {\color{brown}.};
\node[circle,draw=brown,fill=brown,minimum size=0pt,inner sep=0pt] (c12) at (1,0.45) {\color{brown}.};
\draw[->,brown] (c11) edge (c12);
\node[anchor=west] (a1) at (-1.25,0.3) {
\scriptsize$\left\{\hspace*{-0.3pt}\begin{array}{lr}  v=&3\\
                       \maxc=&2\\
                       \mins=&1
       \end{array}\right.$};
\node[anchor=west] (a2) at (3,0.3) {\scriptsize$c=2$};
\end{scope}
\begin{scope}[xshift=0.6cm]
\node[circle,draw=brown,fill=brown,minimum size=0pt,inner sep=0pt] (c11) at (1,0.15) {\color{brown}.};
\path
    (c11) edge [loop] node {} (c11);
\end{scope}
\end{tikzpicture}}
\\
{\footnotesize$\neg$\ding{172}}           & {\footnotesize\phantom{$\neg$}\ding{173}} & {\footnotesize$\neg$\ding{174}}           & {\footnotesize$\left\lfloor v/\,\maxc\right\rfloor+1$}           &
\adjustbox{valign=m}{\begin{tikzpicture}
\begin{scope}
\node[circle,draw=brown,fill=brown,minimum size=0pt,inner sep=0pt] (c11) at (1,0) {\color{brown}.};
\node[circle,draw=brown,fill=brown,minimum size=0pt,inner sep=0pt] (c12) at (1,0.3) {\color{brown}.};
\node[circle,draw=brown,fill=brown,minimum size=0pt,inner sep=0pt] (c13) at (1,0.6) {\color{brown}.};
\draw[->,brown] (c11) edge (c12);
\draw[->,brown] (c12) edge (c13);
\draw[->,brown,rounded corners=2pt] (c13) -- ($(c13)+(0.2,0)$) -- ($(c11)+(0.2,0)$) -- (c11);
\node[anchor=west] (a1) at (-1.25,0.3) {
\scriptsize$\left\{\hspace*{-0.3pt}\begin{array}{lr}  v=&5\\
                       \maxc=&3\\
                       \mins=&2
       \end{array}\right.$};
\node[anchor=west] (a2) at (3,0.3) {\scriptsize$c=2$};
\end{scope}
\begin{scope}[xshift=0.6cm]
\node[circle,draw=brown,fill=brown,minimum size=0pt,inner sep=0pt] (c11) at (1,0) {\color{brown}.};
\node[circle,draw=brown,fill=brown,minimum size=0pt,inner sep=0pt] (c12) at (1,0.3) {\color{brown}.};
\draw[->,brown] (c11) edge (c12);
\draw[->,brown,rounded corners=2pt] (c12) -- ($(c12)+(0.2,0)$) -- ($(c11)+(0.2,0)$) -- (c11);
\end{scope}
\end{tikzpicture}}
\\
{\footnotesize$\neg$\ding{172}}           & {\footnotesize$\neg$\ding{173}}           & {\footnotesize\phantom{$\neg$}\ding{174}} & {\footnotesize$\left\lfloor v/\,\maxc\right\rfloor+1$}           &
\adjustbox{valign=m}{\begin{tikzpicture}
\begin{scope}
\node[circle,draw=brown,fill=brown,minimum size=0pt,inner sep=0pt] (c11) at (1,0) {\color{brown}.};
\node[circle,draw=brown,fill=brown,minimum size=0pt,inner sep=0pt] (c12) at (1,0.3) {\color{brown}.};
\node[circle,draw=brown,fill=brown,minimum size=0pt,inner sep=0pt] (c13) at (1,0.6) {\color{brown}.};
\draw[->,brown] (c11) edge (c12);
\draw[->,brown] (c12) edge (c13);
\node[anchor=west] (a1) at (-1.25,0.3) {
\scriptsize$\left\{\hspace*{-0.3pt}\begin{array}{lr}  v=&5\\
                       \maxc=&3\\
                       \mins=&1
       \end{array}\right.$};
\node[anchor=west] (a2) at (3,0.3) {\scriptsize$c=2$};
\end{scope}
\begin{scope}[xshift=0.6cm]
\node[circle,draw=brown,fill=brown,minimum size=0pt,inner sep=0pt] (c11) at (1,0) {\color{brown}.};
\node[circle,draw=brown,fill=brown,minimum size=0pt,inner sep=0pt] (c12) at (1,0.3) {\color{brown}.};
\draw[->,brown] (c11) edge (c12);
\end{scope}
\end{tikzpicture}}
\\
{\footnotesize$\neg$\ding{172}}           & {\footnotesize$\neg$\ding{173}}           & {\footnotesize$\neg$\ding{174}}           & {\footnotesize$\left\lfloor v/\,\maxc\right\rfloor+2$}           & 
\adjustbox{valign=m}{\begin{tikzpicture}
\begin{scope}
\node[circle,draw=brown,fill=brown,minimum size=0pt,inner sep=0pt] (c11) at (1,0) {\color{brown}.};
\node[circle,draw=brown,fill=brown,minimum size=0pt,inner sep=0pt] (c12) at (1,0.3) {\color{brown}.};
\node[circle,draw=brown,fill=brown,minimum size=0pt,inner sep=0pt] (c13) at (1,0.6) {\color{brown}.};
\node[circle,draw=brown,fill=brown,minimum size=0pt,inner sep=0pt] (c14) at (1.3,0.6) {\color{brown}.};
\node[circle,draw=brown,fill=brown,minimum size=0pt,inner sep=0pt] (c15) at (1.3,0) {\color{brown}.};
\draw[->,brown] (c11) edge (c12);
\draw[->,brown] (c12) edge (c13);
\draw[->,brown] (c13) edge (c14);
\draw[->,brown] (c14) edge (c15);
\draw[->,brown] (c15) edge (c11);
\node[anchor=west] (a1) at (-1.25,0.3) {
\scriptsize$\left\{\hspace*{-0.3pt}\begin{array}{lr}v=&14\\
                       \maxc=&5\\
                       \mins=&3
       \end{array}\right.$};
\node[anchor=west] (a2) at (3,0.3) {\scriptsize$c=4$};
\end{scope}
\begin{scope}[xshift=0.6cm]
\node[circle,draw=brown,fill=brown,minimum size=0pt,inner sep=0pt] (c11) at (1,0) {\color{brown}.};
\node[circle,draw=brown,fill=brown,minimum size=0pt,inner sep=0pt] (c12) at (1,0.3) {\color{brown}.};
\node[circle,draw=brown,fill=brown,minimum size=0pt,inner sep=0pt] (c13) at (1,0.6) {\color{brown}.};
\draw[->,brown] (c11) edge (c12);
\draw[->,brown] (c12) edge (c13);
\draw[->,brown,rounded corners=2pt] (c13) -- ($(c13)+(0.2,0)$) -- ($(c11)+(0.2,0)$) -- (c11);
\end{scope}
\begin{scope}[xshift=1.05cm]
\node[circle,draw=brown,fill=brown,minimum size=0pt,inner sep=0pt] (c11) at (1,0) {\color{brown}.};
\node[circle,draw=brown,fill=brown,minimum size=0pt,inner sep=0pt] (c12) at (1,0.3) {\color{brown}.};
\node[circle,draw=brown,fill=brown,minimum size=0pt,inner sep=0pt] (c13) at (1,0.6) {\color{brown}.};
\draw[->,brown] (c11) edge (c12);
\draw[->,brown] (c12) edge (c13);
\draw[->,brown,rounded corners=2pt] (c13) -- ($(c13)+(0.2,0)$) -- ($(c11)+(0.2,0)$) -- (c11);
\end{scope}
\begin{scope}[xshift=1.5cm]
\node[circle,draw=brown,fill=brown,minimum size=0pt,inner sep=0pt] (c11) at (1,0) {\color{brown}.};
\node[circle,draw=brown,fill=brown,minimum size=0pt,inner sep=0pt] (c12) at (1,0.3) {\color{brown}.};
\node[circle,draw=brown,fill=brown,minimum size=0pt,inner sep=0pt] (c13) at (1,0.6) {\color{brown}.};
\draw[->,brown] (c11) edge (c12);
\draw[->,brown] (c12) edge (c13);
\draw[->,brown,rounded corners=2pt] (c13) -- ($(c13)+(0.2,0)$) -- ($(c11)+(0.2,0)$) -- (c11);
\end{scope}
\end{tikzpicture}}
\\
\bottomrule
\end{tabular}
\caption{Simplified versions of Conj.~(\ref{ex:conjecture1}) wrt conditions \ding{172}, \ding{173} and \ding{174}}
\label{tab:conj1} 
\end{table*}

\tikzset{every loop/.style={min distance=3.5mm,in=330,out=30,looseness=4,->,brown}}
\begin{table*}[bt]
\center
\begin{tabular}{cccc}
\toprule
{\footnotesize Cond.~\ding{175}}                          & {\footnotesize Cond.~\ding{176}}                          & {\footnotesize simplified bound for $\maxs$}                                            & {\footnotesize witness parameters}                                \\
\midrule
{\footnotesize\phantom{$\neg$}\ding{175}} & {\footnotesize\phantom{$\neg$}\ding{176}} & {\footnotesize$v$}                                          &
\adjustbox{valign=m}{\begin{tikzpicture}
\begin{scope}
\node[circle,draw=brown,fill=brown,minimum size=0pt,inner sep=0pt] (c11) at (1.7,0) {\color{brown}.};
\path
    (c11) edge [loop] node {} (c11);
\node[anchor=west] (a1) at (-1.25,0) {$v=1$ $s=1$ $\mins=1$};
\node[anchor=west] (a1) at (2.2,0) {$\maxs=1$};
\end{scope}
\end{tikzpicture}}
\\
{\footnotesize$\neg$\ding{175}}           & {\footnotesize$\neg$\ding{176}}           &
{\footnotesize$\left\lceil(v-\mins)/(s-1)\right\rceil$} &
\adjustbox{valign=m}{\begin{tikzpicture}
\begin{scope}
\node[circle,draw=brown,fill=brown,minimum size=0pt,inner sep=0pt] (c11) at (1.7,0) {\color{brown}.};
\node[circle,draw=brown,fill=brown,minimum size=0pt,inner sep=0pt] (c12) at (2.0,0) {\color{brown}.};
\draw[->,brown] (c11) edge (c12);
\node[anchor=west] (a1) at (-1.25,0) {$v=2$ $s=2$ $\mins=1$};
\node[anchor=west] (a1) at (2.2,0) {$\maxs=1$};
\end{scope}
\end{tikzpicture}}
\\
\bottomrule
\end{tabular}
\caption{Simplified versions of Conj.~(\ref{ex:conjecture3}) wrt conditions \ding{175} and \ding{176}}
\label{tab:conj3} 
\end{table*}

\tikzset{every loop/.style={min distance=3.5mm,in=330,out=30,looseness=4,->,brown}}
\begin{table*}[bt]
\center
\begin{tabular}{cccc}
\toprule
{\footnotesize Cond.~\ding{177}}                          & {\footnotesize Cond.~\ding{178}}                          & {\footnotesize simplified bound for $\maxc$}                                            & {\footnotesize witness parameters}                                \\
\midrule
{\footnotesize\phantom{$\neg$}\ding{177}}           & {\footnotesize\ding{178}$\lor\neg$\ding{178}}           & {\footnotesize$\minc$} &
\adjustbox{valign=m}{\begin{tikzpicture}
\begin{scope}
\node[circle,draw=brown,fill=brown,minimum size=0pt,inner sep=0pt] (c11) at (1.7,0.15) {\color{brown}.};
\path
    (c11) edge [loop] node {} (c11);
\node[anchor=west] (a1) at (-1.25,0.3) {
\scriptsize$\left\{\hspace*{-0.3pt}\begin{array}{ll}  v=1 & \minc=1\\
 c=1 & \maxs=1
       \end{array}\right.$};
\node[anchor=west] (a2) at (3,0.3) {\scriptsize$\maxc=1$};
\end{scope}
\end{tikzpicture}}
\\
{\footnotesize$\neg$\ding{177}} & {\footnotesize\ding{178}\phantom{$\lor\neg$\ding{178}}} & {\footnotesize$\maxs$}                                          &
\adjustbox{valign=m}{\begin{tikzpicture}
\begin{scope}
\node[circle,draw=brown,fill=brown,minimum size=0pt,inner sep=0pt] (c11) at (1.7,0.15) {\color{brown}.};
\node[circle,draw=brown,fill=brown,minimum size=0pt,inner sep=0pt] (c12) at (1.7,0.45) {\color{brown}.};
\node[circle,draw=brown,fill=brown,minimum size=0pt,inner sep=0pt] (c13) at (2.0,0.45) {\color{brown}.};
\draw[->,brown] (c11) edge (c12);
\draw[->,brown] (c12) edge (c13);
\draw[->,brown,rounded corners=2pt] (c13) -- ($(c13)-(0,0.3)$) -- ($(c11)+(0.2,0)$) -- (c11);
\node[anchor=west] (a1) at (-1.25,0.3) {
\scriptsize$\left\{\hspace*{-0.3pt}\begin{array}{ll}  v=5 & \minc=1\\
 c=3 & \maxs=3
       \end{array}\right.$};
\node[anchor=west] (a2) at (3,0.3) {\scriptsize$\maxc=3$};
\end{scope}
\begin{scope}[xshift=0.6cm]
\node[circle,draw=brown,fill=brown,minimum size=0pt,inner sep=0pt] (c11) at (1.7,0.15) {\color{brown}.};
\node[circle,draw=brown,fill=brown,minimum size=0pt,inner sep=0pt] (c12) at (1.7,0.45) {\color{brown}.};
\path
    (c11) edge [loop] node {} (c11);
\path
    (c12) edge [loop] node {} (c12);
\end{scope}

\end{tikzpicture}}
\\
{\footnotesize$\neg$\ding{177}} & {\phantom{\footnotesize\ding{178}$\lor$}$\neg$\ding{178}} & 
{\footnotesize$\left\lceil(v-\minc)/(c-1)\right\rceil$} &
\adjustbox{valign=m}{\begin{tikzpicture}
\begin{scope}
\node[circle,draw=brown,fill=brown,minimum size=0pt,inner sep=0pt] (c11) at (1.7,0.15) {\color{brown}.};
\node[circle,draw=brown,fill=brown,minimum size=0pt,inner sep=0pt] (c12) at (1.7,0.45) {\color{brown}.};
\draw[->,brown] (c11) edge (c12);
\node[anchor=west] (a1) at (-1.25,0.3) {
\scriptsize$\left\{\hspace*{-0.3pt}\begin{array}{ll}  v=3 & \minc=1\\
 c=2 & \maxs=1
       \end{array}\right.$};
\node[anchor=west] (a2) at (3,0.3) {\scriptsize$\maxc=2$};
\end{scope}
\begin{scope}[xshift=0.6cm]
\node[circle,draw=brown,fill=brown,minimum size=0pt,inner sep=0pt] (c11) at (1.7,0.15) {\color{brown}.};
\path
    (c11) edge [loop] node {} (c11);
\end{scope}
\end{tikzpicture}}
\\
\bottomrule
\end{tabular}
\caption{Simplified versions of Conj.~(\ref{ex:conjecture4}) wrt conditions \ding{177} and \ding{178}}
\label{tab:conj4} 
\end{table*}

\section{Proving a Conjecture on {\sc Rooted Trees} Found Using Decomposition Methods}\label{sec:tree}

We give a conjecture about the {\sc Rooted Tree} combinatorial object.
As this proof will also use the {\sc Partition} combinatorial object, we start by introducing the object {\sc Partition}.

Consider the {\sc Partition} combinatorial object, where each instance is a sequence $\mathcal{S}$ of integers identified by the following characteristics:
$\pn$ the sequence length,
$\nval$ the number of distinct values in $\mathcal{S}$,
$\pmin$ (resp.~$\pmax$) the number of occurrences of the least (resp.~most) frequent integer in $\mathcal{S}$,
$\prange$ the difference $\pmax-\pmin$.

\begin{example}
We have the {\sc Partition} instance 
$\mathcal{S}=[1,1,1,1,1,1,2,2,3,3,4]$,
where  the values of  characteristics are $\pn=11$, $\nval=4, \pmin = 1, \pmax=6, \prange = 5$.
\end{example}

A {\sc Rooted Tree} is an acyclic digraph where the unique vertex that has no outgoing arc is called the \emph{root}, and where a \emph{leaf} is a vertex with no incoming arc. An instance of a rooted tree $\mathcal{T}$ is identified by the following characteristics:
$\tn$ the number of vertices of $\mathcal{T}$,
$\tleaves$ the number of leaves of $\mathcal{T}$,
$\tmindegree$ (resp.~$\tmaxdegree$) the minimum (resp.~maximum) number of successors of the vertices of $\mathcal{T}$.
Conj.~(\ref{eq:conjecture5}) gives a
sharp lower and upper bound on $\tleaves$ wrt $\tn$, $\tmindegree$, and $\tmaxdegree$.
\begin{equation}\label{eq:conjecture5}
\left\{~
\begin{aligned}
  \tmindegree=0\Rightarrow~ & \tleaves=1 \\
  \tmindegree>0\Rightarrow~ & \tleaves\in\left[
\left\lceil \frac{\tn\cdot\tmindegree+\tmaxdegree-\tmindegree-\tn+1}{\tmindegree}\right\rceil,\right.
\left.\left\lfloor \frac{\tn\cdot\tmaxdegree+\tmindegree-\tmaxdegree-\tn+1}{\tmaxdegree}\right\rfloor\right]
\end{aligned}
\right.
\end{equation}

\begin{proof}[Proof of Conj.~(\ref{eq:conjecture5})]
The first case is obvious as $\tmindegree =0$ means that the tree has just one vertex.

To prove the second case $\tmindegree > 0$, we first prove two conjectures of the {\sc Partition} object.
We then represent a {\sc Rooted Tree} as a {\sc Partition} object and finally derive from that representation the Conj.~(\ref{eq:conjecture5}).

For the {\sc Partition} object, let $o_i$ be the number of occurrences of the $i$\nobreakdash-th integer in $\mathcal{S}$, and $\pn_p$ the length of $\mathcal{S}$.
We have 
$\pn_p = \pmax +\sum_{i=1}^{\nval-1}o_i = \prange + \pmin + \sum_{i=1}^{\nval-1}o_i \geq \prange + \nval\cdot\pmin$. 
And as $\nval \in \mathbb{N}$, we obtain the inequality
 $\nval\leq${$ \left\lfloor\frac{\pn_p-\prange}{\pmin}\right\rfloor$}, the first true conjecture found in the context of the {\sc Partition} object. We also have
$\pn_p = \pmin + \sum_{i=2}^{\nval} o_i = \pmax -\prange  + \sum_{i=2}^{\nval}o_i \leq \nval\cdot\pmax-\prange$, which leads to the second true conjecture of the {\sc Partition} object $\nval\geq${$ \left\lceil (\pn_p+\prange)/\pmax\right\rceil$}.

Now we represent a {\sc Rooted Tree} as a vector $\mathcal{S}'$ of $\tn-1$ components where $\forall j\in [1;\tn-1]$, the $j$-th component $\mathcal{S}'[j]$ is the father of the vertex $j$ when the root is the vertex $j=0$. This configuration shows that a {\sc Rooted Tree} is a {\sc Partition} object where $\nval = \tn-\tleaves, \pn_p = \tn-1, \pmin = \tmindegree$ and $\pmax = \tmaxdegree$. As an illustration, the following {\sc Rooted Tree} 
$$\tikz{
\node[circle,draw,minimum size=8pt,inner sep=0pt] (a) at (0.0,0) {\scriptsize 1};
\node[circle,draw,minimum size=8pt,inner sep=0pt] (b) at (0.5,0) {\scriptsize 0};\node[circle,draw,minimum size=8pt,inner sep=0pt] (c) at (1,0) {\scriptsize 4};\node[circle,draw,minimum size=8pt,inner sep=0pt] (d) at (0.5,-0.5) {\scriptsize 2};
\node[circle,draw,minimum size=8pt,inner sep=0pt] (e) at (1,-0.5) {\scriptsize 3};\node[circle,draw,minimum size=8pt,inner sep=0pt] (f) at (1.5,-0.5) {\scriptsize 5};\node[circle,draw,minimum size=8pt,inner sep=0pt] (g) at (1.5,0) {\scriptsize 6};
\draw[<-] (b) edge (a);
\draw[<-] (b) edge (c);
\draw[<-] (b) edge (d);
\draw[<-] (d) edge (e);
\draw[<-] (e) edge (f);
\draw[<-] (e) edge (g);
}$$ is represented by the {\sc Partition} object $\mathcal{S}'=[0,0,2,0,3,3]$ as we have $\mathcal{S}'[1]=\mathcal{S}'[2]=\mathcal{S}'[4] = 0, \mathcal{S}'[3] =2, \mathcal{S}'[5] =  \mathcal{S}'[6] = 3$.
So by applying the two bounds of $\nval$ which was proven earlier, we have

\begin{equation*}
\left\lceil(\pn_p+\prange)/ \pmax\right\rceil\leq\nval\leq\left\lfloor(\pn_p-\prange)/\pmin\right\rfloor\Leftrightarrow
\end{equation*}

\begin{equation*}
\left\lceil (\tn-1+(\tmaxdegree-\tmindegree))/\tmaxdegree\right\rceil\leq\tn-\tleaves\leq\left\lfloor(\tn-1-(\tmaxdegree-\tmindegree))/\tmindegree\right\rfloor\Leftrightarrow
\end{equation*}

\begin{equation*}
\tn - \left\lfloor(\tn-1-(\tmaxdegree-\tmindegree))/\tmindegree\right\rfloor\leq\hspace*{7pt}\tleaves\hspace*{7pt}\leq\tn - \left\lceil(\tn-1+(\tmaxdegree-\tmindegree))/\tmaxdegree\right\rceil\Leftrightarrow
\end{equation*}

\begin{equation*}
\left\lceil\tn -(\tn-1-(\tmaxdegree-\tmindegree))/\tmindegree\right\rceil\leq\hspace*{4pt}\tleaves\hspace*{4pt}\leq \left\lfloor\tn -(\tn-1+(\tmaxdegree-\tmindegree))/\tmaxdegree\right\rfloor
\end{equation*}

\noindent which leads to the Conj.~(\ref{eq:conjecture5}).
\end{proof}

\section{Conclusion}

In this report, five conjectures found by the decomposition methods introduced in~\cite{DecompGindullinBCDQ24} were proved. Namely, four conjectures on {\sc Digraphs} and one on {\sc Rooted Trees}. To deal with the complexity of those conjectures, various methods were used. The method used to prove conjectures on {\sc Digraphs} is enumerating all the corresponding disjoint and simple cases of the conjectures and prove each simple case by contradiction or deduction. The  conjecture on {\sc Rooted Trees} was proved by using some properties of partitioned sets.

\end{document}